\newtheorem{definition}{Definition}
\newtheorem{theorem}{Theorem}
\newtheorem{proposition}{Proposition}
\newtheorem{lemma}{Lemma}
\newcommand{\sem}[1]{\llbracket #1 \rrbracket}
\newcommand{\T}{{\bf T}}
\newcommand{\F}{{\bf F}}
\newcommand{\Ta}{{\bf T} }
\newcommand{\Fa}{{\bf F} }
\newcommand{\N}{{\bf N}}
\newcommand{\Na}{{\bf N} }
\newcommand{\B}{{\bf B}}
\newcommand{\Ba}{{\bf B} }
\newcommand{\cmi}{\rightarrow_{cmi}}
\title{K3, \L3, LP, RM3, A3, FDE, M: \\How to Make Many-Valued Logics Work for You}
\author{Allen P. Hazen\\Dept. Philosophy\\University of Alberta \and Francis Jeffry Pelletier\\Dept. Philosophy\\University of Alberta}
\date{Version of September 18, 2017}
\begin{document}
\maketitle

\begin{abstract}
\noindent We investigate some well-known (and a few not-so-well-known) many-valued logics that have a small number (3 or 4) of truth values.  For some of them we complain that they do not have any \emph{logical} use (despite their perhaps having some intuitive semantic interest) and we look at ways to add features so as to make them useful, while retaining their intuitive appeal.  At the end, we show some surprising results in the system FDE, and its relationships with features of other logics.  We close with some  new examples of ``synonymous logics.''  An Appendix contains a natural deduction system for our augmented FDE, and proofs of soundness and completeness.
\end{abstract}

\section{Truth-Values: Many, versus Gaps and Gluts}   
Everyone knows and loves the two ``classical'' truth-values, {\bf T}rue and {\bf F}alse.  In ``classical'' logic\footnote{There are those (e.g., \citealp{Pri06}) who claim that this common terminology of `classical' betrays a bias and is in fact incorrect as an account of the history of thought about logic.  We won't attempt to deal with this, and will just use the common terminology.  For a more comprehensive review of topics and issues relevant to truth values see \citet{SW17truth-values}}, every sentence is either {\bf T} or {\bf F} but not both.  Before getting to the detailed issues of this paper, we pause to make four comments of a ``philosophical nature'' with the intent of setting the issues aside.
First, there is the long-standing issue of whether a logic contains sentences, statements, or propositions.  We take no position on this issue: what we think of as the \emph{logical} point is neutral on this issue.  
Related to this, perhaps, is the issue of whether a logic should be considered a set of its theorems or defined instead by the valid inferences it accepts.  This topic \emph{does} mark a distinction in the three-valued cases we consider, and we will remark on this as the occasion arises.  Second, there are logics -- perhaps not ``classical'' ones -- that would prefer to claim the objects of interest are (e.g.)~obligations, or commands, or ethical/epistemic items, rather than {\bf T} and {\bf F}.  Perhaps the items that have such values are not to be considered `sentences' but rather exhortations or promises or\ldots?  If so, then we say that we are restricting our attention to (declarative) sentences. Third, while `sentence' seems appropriate for propositional logic, some further accommodation is required for formulas of predicate logic.  The present paper is mostly concerned with propositional logic, and so we will simply use `sentence'.  And finally, there is the philosophical issue of whether {\bf T} and {\bf F} should be considered to be \emph{objects}-- perhaps existing in a ``third realm'' -- or as \emph{properties} of sentences.  Our view is that this distinction -- which is maybe of considerable philosophical interest -- does not affect any of the claims we will be making, and so we just will pass over it in silence.

So, given that \Ta and \Fa are so revered, where does this leave many-valued logic?  One intuition that has been held by very many theorists is that there simply are more than just the \Ta and \Fa  truth values.  Some of these theorists hold that various phenomena show that there is another category beyond \Ta and \F:  perhaps we should call it {\bf I} for {\bf I}ndefinite.  Presupposition failures in a formula might determine that it is {\bf I}; vagueness might determine that a formula is {\bf I}; the unactualized (as yet) future might determine that some formulas are {\bf I}; fictional discourse seems neither straightforwardly true or false, so perhaps sentences about fictional objects also should be {\bf I}.  And there no doubt are other types of formulas that could intuitively be thought to be ``indeterminate''.
\ex. \label{intermediate}\a. The present Queen of Germany is happy.
\b. That 45-year-old man is old.
\c. There will be a Presidential impeachment in the next 30 years.
\b. James T. Kirk is captain of the starship \emph{Enterprise}.

Some theorists, agreeing with the just-expressed intuition that the sentences in \ref{intermediate} are neither \Ta nor \F, nonetheless wish to deny that they designate a third value.  Rather, they say, such sentences \emph{lack} a truth-value (i.e., are \emph{neither} \Ta nor \F; nor are they {\bf I}, since they deny such a value).  Such sentences \emph{have no truth-value}:  they express a truth-value \emph{gap}.  And logics that have this understanding are usually called ``gap logics.''

Formally speaking, there is little difference between the two attitudes of expressing {\bf I} or being a gap.  In both cases one wishes to know how to evaluate complex sentences that contain one of these types of sentences as a part.  That is, we need a way to express the truth-table-like properties of this third option.  In the remainder of this paper we will use \Na for this attitude, whether it is viewed as expressing {\bf I} or as being a gap.  (The \Na can be viewed as expressing the notion ``neither \Ta nor \F'', which is neutral between expressing {\bf I} or expressing a gap.)  And we will call the logics generated with either one of these understandings of \N, ``gap logics.''

Gap logics -- whether taken to allow for truth values other than \Ta and \F, or instead to allow for some sentences not to have a truth value -- are only one way to look at some recalcitrant natural language cases.  Other theorists point to some different phenomena in their justification of a third truth value.  The semantic paradoxes, such as the Liar Paradox (where
the sentence ``This sentence is false'' is false if it is true, and is true
if it is false) suggest that some statements should be treated as \emph{both} true and false.  It is difficult to deal with this paradox in a gap logic, because saying that ``This sentence is false'' is neither true nor false leads immediately to the strengthened Liar Paradox:  ``Either this sentence is  false or else it is neither true nor false''.  Here it seems that claiming that the sentence expresses a gap leads to truth, while saying that it is true leads to falsity and saying that it is false leads to truth.  In contrast, to say that the sentence ``This sentence is false'' is \emph{both} true and false does not lead to this regress:  The sentence ``This sentence is either false or else is both true and false'' leads only to the conclusion that it is both true and false.  This \emph{dialetheic} theory -- that some sentences are both true and false -- is called a ``glut theory'' because of the presence of both truth values characterizing some sentences.  The intermediate value, on this conception, can be called \Ba (for ``both''), and we will use this.

There are also examples from na\"{i}ve set theory such as
the Paradox of Well-Founded sets, 
the Paradox of a Universal Set, 
Russell's Paradox, 
and
Richard's Paradox 
(see \cite{Can14} for a historical overview of these).  There is a desire on the part of some theorists in the philosophy of mathematics  to reinstate na\"{i}ve set theory and use it in the development of mathematics, instead of (for example) Zermelo-Frankel set theory.  Such theorists perhaps will feel encouraged in their belief that the glut-style of resolution of the Liar Paradox might be usable in the case of these other set theoretic paradoxes, and that na\"{i}ve set theory can once again be used.

Various theorists have pointed out that vagueness needn't be viewed as \emph{missing} a truth-value; it is equally plausible to think of vagueness as manifesting \emph{both} the positive character and its absence.  A middle-height person can be seen as \emph{both} short and tall, equally as plausibly as being \emph{neither} short nor tall.  (See \cite{Hyde97,BC01,HC08}.)  As well, certain psychological studies seem to indicate that the ``dialetheic answer'' is more common than the ``gap'' answer, at least in a wide variety of cases (see \cite{AP11,Ripley11} for studies that examine people's answers to such cases).

There are also other categories of examples that have been discussed
in philosophy over the ages:
Can God make a stone so heavy that He can't lift it?
Perhaps the answer is, ``Well, yes and no.''
There are cases of vague predicates such as \emph{green} and \emph{religion}, 
e.g., an object can be both green and not green, and a belief system can 
be both a religion and not a religion.
There are legal systems that are inconsistent, in which an action is both
legal and illegal.
In the physical world, there is a point when a person is walking through a
doorway at which the person is both in and not in the room.
And so on.
As a result, the glut theory has its share of advocates.  

A way to bring the gap and glut views of the truth values under the same conceptual roof is to think of the method that assigns a truth value to formulas as a \emph{relation} rather than a \emph{function}.  A function, by definition, assigns exactly one value to a formula -- either \Ta or \Fa or, in the context of our 3-valued logics, it may also assign \Ba or {\bf I}.  A relation, however, can assign more than one value: we may now think that there are exactly two truth values, \Ta and \F, but the assignment \emph{relation} can assign subsets of $\{\T,\F\}$.  A gap logic allows assignment of $\{\T\}$, $\{\F\}$, and $\emptyset$.  A glut logic allows $\{\T\}$, $\{\F\}$, and $\{\T,\F\}$.

But both the gap and the glut logics seem to have difficulties of a logical nature that we will examine in \S\ref{Wrong}.

\section{K3 and LP: The Basic 3-Valued Logics}
In all of the 3-valued logics we consider, the 3-valued matrices for $\land,\lor,\neg$ are the same, except for what the ``third value'' is called: in K3 \citep[\S64]{Kleene1952} 
we call it \Na but  in LP \citep{Asenjo66,Priest79} we call it \B.  

One way to understand why the K3 truth values for these connectives are correctly given in the following tables is to think of \Na as meaning ``has one of the values \Ta or \F, but I don't know which one''.  In this understanding, a negation of \Na would also have to have the value \N; a disjunction with one disjunct valued \Na and the other \Ta would  as a whole be \T \ldots but if one were valued \Na and the other \F, then as a whole it would have to be \N.  Similar considerations will show that the $\land$ truth table is also in accord with this understanding of the \Na value.  (This interpretation of the truth values presumes that one pays no attention to the \emph{interaction} of the connectives:  $(p\land\neg p)$ seems to be false and $(p\lor\neg p)$ seems true, but that's because of the interaction of two connectives.)

On the other hand, if the middle value is understood as being \emph{both} true and false as in LP, we will still get those same truth tables, with only a change of letter from \Na to \B:  negating something that is both true and false will yield something that is both false and true.  If a disjunction had one disjunct valued \Ba and the other \T, then the entire disjunction would be valued \T.  But if that second disjunct were valued \Fa instead, then the entire disjunction would be valued \B.  Similar considerations will show that the $\land$ truth table is also in accord with this understanding of the \Ba value.

\bigskip

\begin{tabular}{c| c c c}
$\land$ & \T & \N/\B & \F \\
\hline
\T & \T & \N/\B & \F \\
\N/\B & \N/\B & \N/\B & \F \\
\F & \F & \F & \F \\
\end{tabular}
\qquad
\begin{tabular}{c| ccc}
$\lor$ & \T & \N/\B & \F \\
\hline
\T & \T & \T & \T \\
\N/\B & \T & \N/\B & \N/\B \\
\F & \T & \N/\B & \F \\
\end{tabular}
\qquad
\begin{tabular}{c| c}
$\neg$ & \\
\hline
\T & \F \\
\N/\B & \N/\B \\
\F & \T \\
\end{tabular}

\bigskip

Despite the apparent identity of K3 and LP (other than the mere change of names of \Na and B), the two logics are in fact different by virtue of their differing accounts of what semantic values are to be considered as privileged.  That is, which values are to be the \emph{designated values} for the logic --  those values that are semantically said to be the ones that the logic is concerned to manifest in a positive light.  In both logics this positivity results from the value(s) that exhibit (at least some degree of) truth: but in K3 only \Ta has that property whereas in LP both \Ta and \Ba manifest some degree of truth.  And therefore LP treats the semantically privileged values -- the \emph{designated values} -- to be both \Ta and \B, while in K3 the only designated value is \T.

Although both K3 and LP have no formula whose value is always \T, LP (unlike K3) does have formulas whose semantic value is always designated.  In fact, the class of propositional LP formulas that are designated is identical to that of propositional classical 2-valued logic, a fact to which we return in  \S\ref{Wrong}.  

Note that these truth tables give the classical values for compound formulas with classical components, and give the intermediate value for formulas all of whose components have that value.  Thus neither logic is functionally complete: no truth function giving a non-classical (classical) value for uniformly classical (non-classical) arguments can be represented in them.

\section{FDE and M:  Four-Valued Logics}\label{FDE}
The logic FDE was described in \citep{Belnap92} and \citep{Dunn76}.  It is a four-valued system: the values are \T, \F, \B, and \N\ldots the four values we have already encountered\footnote{Or, returning to the conception of a relational assignment, FDE allows any subset of the two ``real'' truth values:  $\{\T\}, \{\F\}, \{\T,\F\}$, and $\emptyset$.} (although the intuitive semantics behind these values, as given by Belnap, are perhaps somewhat different than we encountered above).  
K3 and LP agree with each other (and with classical logic) as regards the values \Ta and \F, so in combining them it is easy to identify K3's \Ta and \Fa with LP's \Ta and \F.  FDE then agrees with K3 about \N: agrees that it is not a designated value, and agrees on the values of combinations of \Na with \Ta and \F.  FDE also agrees with LP about \B: agrees that it is a designated value, and agrees on the values of combinations of \Ba with \Ta and \F.  Neither of the three-valued logics allow combinations of \Ba with \N, so these combinations need a bit more thought.  
The choice made is perhaps most easily described by representing the four values as \emph{sets} of the two classical values: 
\T=\{T\}, \B=\{T,F\}, \N=\{~\}, \F=\{F\} 
(where of course the bold-face values stand for FDE values and the normal text values stand for classical ones!).  A given classical value then goes into the set of values of a compound formula just in case the component formulas have values containing classical values which, by the classical rules, would yield the given value for the compound.  (This way of thinking also makes the designation statue of \Ba and \Na seem natural: an FDE value is designated just in case it, thought of as a set, contains the classical value T.)

This treatment of the four values and their ordering is perhaps best visualized by the diagram:

\hspace{3cm}
\begin{tikzpicture}
\node (I)    at ( 2,2)   {\T};
\node (II)   at (1,1)   {\B};
\node (III)  at (3,1) {\N};
\node (IV)   at (2,0) {\F};

\draw[thick,->] (I) -- (II);
 \draw[thick,->]      (I) -- (III);
 \draw[thick,->]      (II) -- (IV);
  \draw[thick,->]      (III) -- (IV);
 \end{tikzpicture}

\noindent The four values form a lattice, with the value of a conjunction (disjunction) being the meet (join) of the values of its conjuncts (disjuncts), and with negation interpreted as inverting the lattice order.

More formally, we take the basic truth-tables for $\land,\lor$ and $\neg$ to be straightforwardly taken from K3 and LP, for the values that do not involve only \Ba and \Na  together.  For those values we interpolate as given in the truth tables in Table \ref{FDEtable} for the FDE connectives.

\begin{table}[!h]
\begin{center}
\begin{tabular}{c c | c c c}
$\varphi$ & $\psi$ & $\neg\varphi$ & $(\varphi\land\psi)$ & $(\varphi\lor\psi)$ \\
\hline
\T & \T & \F & \T & \T \\
\T & \B & & \B & \T \\
\T & \N & & \N & \T \\
\T & \F & & \F & \T \\
\B & \T & \B & \B & \T \\
\B & \B & & \B & \B \\
\B & \N & & \F & \T \\
\B & \F & & \F & \B \\
\N & \T & \N & \N & \T \\
\N & \B & & \F & \T \\
\N & \N & & \N & \N \\
\N & \F & & \F & \N \\
\F & \T & \T & \F & \T \\
\F & \B & & \F & \B \\
\F & \N & & \F & \N \\
\F & \F & & \F & \F \\
\end{tabular}
\caption{Four-valued truth-tables for the basic connectives of FDE}
\label{FDEtable}
\end{center}
\end{table}

\section{What's Wrong with K3, LP, and FDE}\label{Wrong}
We start with K3 and LP.  Note that the $\land,\lor,\neg$ truth-functions always yield a \Na (or a \B) whenever all the input values are \Na (or \B, respectively).  This means that there are no formulas that always take the value \T.  In these logics, those three truth functions are all of the primitive functions, and so that same fact also implies that not every truth function is definable in K3 (or LP).  And if $\varphi$'s being semantically valid means that $\varphi$ always takes the value \Ta for all input values, as it does in K3, then there are no semantically valid formulas in K3.

And given that these are all the primitive truth functions, then the only plausible candidate for being a conditional in these logics comes by way of the classical definition:
$ (\varphi\supset\psi) =_{df} (\neg\varphi\lor\psi) $.

\begin{table}[!h]\label{K3LParrows}
\begin{center}
\begin{tabular}{c c| c c}
$\varphi$ & $\psi$ & ($\varphi$ $\supset_{K3}$ $\psi$) & ($\varphi$ $\supset_{LP}$ $\psi$) \\
\hline
\T & \T & \T & \T \\
\T & \N/\B & \N & \B \\
\T & \F & \F & \F \\
\N/\B & \T & \T & \T \\
\N/\B & \N/\B & \N & \B \\
\N/\B & \F & \N & \B \\
\F & \T & \T & \T \\
\F & \N/\B & \T & \T \\
\F & \F & \T & \T \\
\end{tabular}
\caption{Conditionals available in K3 and LP, using the classical definition}
\end{center}
\end{table}

The only difference between $\supset_{K3}$ and $\supset_{LP}$ is that in K3 the ``third value'' is understood as indicating a truth-value gap, i.e., as \emph{neither} \Ta nor \F, whereas whereas in LP it is understood as indicating a truth-value glut, i.e., as \emph{both} \Ta and \F.  

One notes that the rule of inference, Modus Ponens (MP), $(\varphi\supset_{K3}\psi), \varphi \vDash \psi$, is a valid rule in K3:  if both premises are \T, then the conclusion will also be, as can be seen from the truth table for $\supset_{K3}$.  Despite this, the \emph{statement} of MP, $((\varphi\supset_{K3}\psi)\land \varphi) \supset_{K3}  \psi$, does \emph{not} always exhibit the value \Ta (since no formula of K3 always takes the value \T).
Thus the deduction theorem does not hold for K3.
Matters are the reverse for the logic LP: the rule of MP is \emph{invalid}, as can be seen by making $\varphi$ be \Ba and $\psi$ be \F.  In that case both $(\varphi\supset_{LP}\psi)$ and $\varphi$ have designated values (both are \B), but yet the conclusion, $\psi$, is \F.  On the other side, the \emph{statement} of MP, $((\varphi\supset_{LP}\psi)\land \varphi) \supset_{LP}  \psi$, is always \T.  (One can see that the counterexample to the rule MP makes both conjuncts of the antecedent be \Ba and hence that antecedent is \B.  But a \Ba antecedent with a \Fa consequent is evaluated \B.  Hence the statement is designated.)  

As we remarked above, the theorems of LP are identical to those of classical logic --
LP does nothing more than divide up the classical notion of truth into two
parts: the ``true-only'' and the ``true-and-also-false''.
As both types of truth are designated values in the logic, the theorems of
the two are the same, and hence LP is not different from classical logic so
far as logical truth goes.
Where it does differ is in the class of valid rules of inference, as we have noted.  The reverse describes the situation in K3: whereas the deduction theorem holds and modus ponens fails for LP, in K3 the deduction theorem fails but modus ponens holds.

So, it seems that both K3 and LP are not very useful as guides to reasoning, despite their apparent (to some) virtues in accounting for intuitive semantic values of sentences that exhibit some troublesome features (such as vagueness or semantic paradox).  After discussing the logical state of K3 and LP, and the disappointing properties of the conditional operators available in them, Soloman Feferman put it:
\begin{quote}
Multiplying such examples, I conclude that \emph{nothing like 
   sustained ordinary reasoning can be carried on in either 
   logic.}  (\citealp[p.~264]{Feferman84}, italics in original.)
\end{quote}
On the topic of non-classical logics more generally, \citet{vanFraassen69} expresses the worry in more picturesque terminology.  New logics, he worries, along with 
 ``the appearance of wonderful new `logical' connectives, and of rules of `deduction' resembling the prescriptions to be read in \emph{The Key of Solomon}'' will  make ``standard logic texts read like witches' grimoires'', and will ``incline one to dismiss the technical study of [non-classical logic] as a mathematical parlor game.''

Turning now to FDE, it is clearly a merger of K3 and LP, so shares all their shortcomings.  
Since FDE includes the feature of K3 that there are no truth functions from all-atomics-having-the-value-\Na to any other truth value, it follows that there are  no formulas that always take one or the other of the designated values \Ta or \B.  And as with K3, the logic is functionally incomplete since there is no formula that will yield one of \T, \B, or \Fa if all its atomic letters are assigned \N.    Like K3 and LP, FDE does not have a usable conditional and hence there is no sense in which it is a logic that one can reason with.  
Although \cite{Feferman84} didn't include FDE in his disparaging remarks we just quoted, since FDE is simply a ``gluing together '' of K3 and LP -- the two logics Feferman did complain about -- it is clear that he would have held the same opinion about FDE.   

We wish to show that matters are not so dire as Feferman seems to think for the FDE-related logics we are investigating.  And so we now turn to possible ways to fix these shortcomings.

\section{An FDE Fix?}\label{FDEfix}

Various researchers have added conditionals to K3 and LP, with the thought of being able to do ``sustained reasoning'' in these logics.  We might mention here \L ukaseiewicz's conditional added to K3\footnote{Although of course this is not the historical reason he came up with his conditional, since he didn't have K3 before him at the time.}, with the intent of accommodating ``future contingents.''  This yields a logic that is usually called \L3.  And conditionals can be added to LP with the intent of allowing reasoning to take place in the context of dialetheism.  We will look here at the more closely at the logics RM3 and a logic we will call LP+cmi that are generated by adding different conditionals to LP.  

Although the metamathematical study of RM3 has been fruitful in the study of relevance logics, it seems to us that LP+cmi is likely to be a more convenient to \emph{use}\footnote {Tedder uses the name A3 for what we call LP+cmi.  Since we are comparing a number of logics, we prefer a more systematic nomenclature.} -- particularly in light of Tedder's \citeyear{Tedder2015} employment of it in formulating mathematically interesting axiomatic theories.   Somewhat to our surprise, it generalizes to the 4-valued case!

Here is the conditional we propose: we call it the ``classical material implication'' (or ``cmi'') and we symbolize it $\cmi$. 

\begin{table}[htb]
\begin{center}
\begin{tabular}{c | c c c c}
$\cmi$ & T & B & N & F \\
\hline
T & T & B & N & F \\
B & T & B & N & F \\
N & T & T & T & T \\
F & T & T & T & T \\
\end {tabular}
\caption{Truth matrix for $\cmi$.}
\label{defcmi}
\end{center}
\end{table}

In \S\ref{FDE} we remarked that the natural understanding of FDE's semantic values is that the designated ones are \Ta and \B, while the undesignated ones are \Na and \F.  As can be seen in Table \ref{defcmi}, any conditional that takes a designated value has the feature that if its antecedent is also designated, then its consequent will be also.  It can also be seen that if the values are restricted to the classical \Ta and \F, then the conditional mirrors the classical $\supset$.  We also obtain the truth table for the classical $\supset$ if we look at this with blurred vision, so that the two designated values blur into one and the two undesignated values similarly coalesce, showing that the valid formulas (i.e., those taking a designated value on every assignment of truth values to their atoms) and inferences (i.e., those preserving designation on every assignment) are precisely those of the pure $\supset$ fragment of classical logic.  Combined with a similarly blurred view of the truth tables for $\land$ and $\lor$ we can extend this observation to formulas having these connectives as well as $\rightarrow_{cmi}$, and we can extend it further to quantifiers if we think of them as generalized conjunctions and disjunctions.  Thus:
\begin{proposition}
For \emph{positive} formulas (i.e., those not containing $\neg$), the valid formulas and inferences of FDE+cmi are exactly those of classical logic.
\end{proposition}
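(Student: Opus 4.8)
The plan is to turn the ``blurred vision'' collapse described just above the statement into an honest matrix homomorphism, and then read off validity and consequence from it. Define a map $h$ from the four FDE values onto the two classical ones by sending each value to its \emph{designation status}: thinking of the values as sets of classical truth values, put $h(x)=\mathrm{T}$ exactly when the classical value $\mathrm{T}$ lies in $x$, so that $h(\T)=h(\B)=\mathrm{T}$ and $h(\N)=h(\F)=\mathrm{F}$. The heart of the argument is to verify that $h$ is a homomorphism from the positive FDE+cmi matrix onto the classical two-valued matrix, i.e.\ that $h(x\land y)=h(x)\land h(y)$, $h(x\lor y)=h(x)\lor h(y)$, and $h(x\cmi y)=h(x)\supset h(y)$ for all values $x,y$, where the operations on the right are the classical ones.

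For $\land$ and $\lor$ this is immediate from the set-theoretic reading of the FDE tables recalled in \S\ref{FDE}: the classical value $\mathrm{T}$ belongs to the value of $(\varphi\land\psi)$ precisely when it belongs to both conjuncts, and to the value of $(\varphi\lor\psi)$ precisely when it belongs to at least one disjunct, which says exactly that $(\varphi\land\psi)$ is designated iff both conjuncts are and $(\varphi\lor\psi)$ is designated iff some disjunct is. For $\cmi$ there is no such shortcut, since the matrix of Table~\ref{defcmi} was stipulated rather than read off from the set semantics (indeed $\cmi$ does not agree with $\neg\varphi\lor\psi$ on the non-classical values). Here I would simply check the sixteen entries directly, grouped by the designation status of the two arguments: inspecting Table~\ref{defcmi} one sees that when the antecedent is designated ($\T$ or $\B$) the output inherits the designation status of the consequent, and when the antecedent is undesignated ($\N$ or $\F$) the output is $\T$ and hence designated, which matches the classical conditional on the collapsed values in all four cases. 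This computation is the only place where the particular choice of the cmi-table is used, and it is the one verification that is not transparent, so it is where I expect the (still routine) work to lie.

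With $h$ in hand both directions follow by a standard valuation transfer. A straightforward induction on a positive formula $\varphi$ using the homomorphism property yields $h(v(\varphi))=\varphi^{\,h\circ v}$ for every FDE+cmi valuation $v$, where the right-hand side is the classical value of $\varphi$ under the classical atom-valuation $h\circ v$. Since $v(\varphi)$ is designated iff $h(v(\varphi))=\mathrm{T}$, this identity says that $\varphi$ is designated under $v$ iff it is classically true under $h\circ v$. Moreover $h$ admits a section: sending a classical valuation $w$ to the FDE valuation $v_w$ with $v_w(p)=\T$ when $w(p)=\mathrm{T}$ and $v_w(p)=\F$ when $w(p)=\mathrm{F}$ gives $h\circ v_w=w$, so \emph{every} classical valuation arises as some $h\circ v$. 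Consequently a positive $\varphi$ is designated on all FDE+cmi valuations iff it is classically valid, and an inference $\Gamma\vDash\varphi$ preserves designation on all FDE+cmi valuations iff it preserves classical truth on all classical valuations, since under $h$ the premises and conclusion collapse simultaneously and every classical valuation is realized; this gives both inclusions at once. Finally, reading the quantifiers as the generalized meets and joins already used for $\land$ and $\lor$, the same map $h$ remains a homomorphism, so the equivalence with classical logic carries over to positive first-order formulas as well.
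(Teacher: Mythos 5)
Your proof is correct, and it is essentially the paper's own argument: the designation-collapsing map $h$ is exactly the "blurred vision" identification of $\T$ with $\B$ and $\N$ with $\F$ that the paper uses, and your homomorphism check, induction, and restriction-to-classical-values section just make that informal collapse rigorous (including the quantifier extension via lattice meets and joins, which the paper also invokes). No gaps; the extra care about surjectivity onto classical valuations is implicit in the paper's remark that the tables restricted to $\T,\F$ mirror the classical connectives.
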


The 4-valued logic FDE+cmi has two obvious 3-valued extensions, defined semantically by restricting the set of truth values allowed: K3+cmi defined by reference to \{\T,\N,\F\} and LP+cmi defined by reference to \{\T,\B,\F\}.  These logics result from the addition of a conditional connective, $\rightarrow_{cmi}$, defined by the relevant rows of the matrix of Table \ref{defcmi}, to the conditional-free logics K3 and LP.

\subsection{FDE+cmi}
The classical nature of the conditional in FDE+cmi makes it a much more pleasant item to work with than most previously proposed conditionals in many-valued logics: something like sustained ordinary reasoning \emph{can} be carried out in FDE+cmi and in its 3-valued extensions.  However, although the positive logic is completely classical, there are some surprises in the interaction of $\rightarrow$ (we will henceforth leave the subscript off this operator) with negation.  One unpleasant one is that the principle of Contraposition fails:  if $\sem{\varphi}=\T$ and $\sem{\psi}=\B$, then $\sem{\varphi\rightarrow\psi}$ takes the designated value \B, but $\sem{\neg\psi\rightarrow\neg\varphi}$ is \F.  And also,  if $\sem{\varphi}=\N$ and $\sem{\psi}=\F$, then $\sem{\varphi\rightarrow\psi}$=\T, but $\sem{\neg\psi\rightarrow\neg\varphi}$= \F. Since a counterexample can be obtained using either of the nonclassical values, Contraposition is invalid not only in FDE+cmi but also in both of its 3-valued extensions K3+cmi and LP+cmi.

We can, of course, define a new conditional connective for which Contraposition holds:

\medskip
$ (\varphi\Rightarrow\psi) =_{df} ((\varphi\rightarrow\psi)\land(\neg\psi\rightarrow\neg\varphi))$

\medskip
\noindent This is a useful connective!  The corresponding biconditional

\medskip
$(\varphi\Leftrightarrow\psi) =_{df} ((\varphi\Rightarrow\psi)\land(\psi\Rightarrow\varphi))$

\medskip
\noindent
takes a designated value if and only if $\sem{\varphi}=\sem{\psi}$.  
(Note that $\sem{(\varphi\Leftrightarrow\psi)}$=\Ba if $\sem{\varphi} = \sem{\psi}$ = \B, and =\Ta if $\sem{\varphi}$ and $\sem{\psi}$ have one of the other three values.)  
As a result, it supports a principle of Substitution:  formulas of the form

\medskip
$(\varphi\Leftrightarrow\psi) \rightarrow ((\cdots\varphi\cdots)\Leftrightarrow(\cdots\psi\cdots))$

\medskip
\noindent are valid.  (In contrast, the biconditional similarly defined in terms of our basic conditional yields a designated value just in case either both of its terms have designated values or both have undesignated values.  As a result, it does not license substitution:  If $\varphi$ and $\psi$ have \emph{different} designated values, and also if they have different undesignated values, $(\varphi\leftrightarrow\psi)$ will have a designated value, but $(\neg\varphi\leftrightarrow\neg\psi)$ will not.  On the other hand, $\leftrightarrow$ can be added to the list of ``positive'' connectives, $\land, \lor, \rightarrow$, whose logic is exactly classical.)

On the other hand, the $\Rightarrow$ connective has certain undesirable features which militate against its adoption as the basic conditional operator of a logic designed for use.  Principles analogous to some of the \emph{structural} rules of \citep{Gentzen34}, easily derivable by the conventional natural deduction rules for the conditional, fail for it.  Some also fail for $\Rightarrow$ in K3+cmi, and others in LP+cmi.

\subsection{On the K3 Side}\label{K3side}
The truth tables for $\rightarrow$ and $\Rightarrow$ differ on only one of the nine lines for K3, as indicated in Table \ref{K3arrows}: if $\sem{\varphi}$=\Na and $\sem{\psi}$=\F, then $\sem{\varphi\rightarrow\psi}$=\Ta but $\sem{\varphi\Rightarrow\psi}$=\N.  

\begin{table}[htb]
\begin{center}
\begin{tabular}{cc|ccr}
$\varphi$ & $\psi$  & ($\varphi\rightarrow\psi$)  & ($\varphi\Rightarrow\psi$) &   \\
\hline
\T  & \T  & \T   & \T &\\
\T  & \N  & \N    & \N &   \\
\T  & \F  & \F   & \F  & \\
\N  & \T  & \T   & \T &  \\
\N  & \N  & \T    & \T &  \\
\N  & \F  & \T    & \N & $\Leftarrow$ \\
\F  & \T  & \T    & \T  & \\
\F  & \N  & \T   & \T &  \\
\F  & \F  & \T   & \T  &  \\
\end{tabular}
\caption{Comparison of two conditionals in K3+cmi}
\label{K3arrows}
\end{center}
\end{table}

\medskip
\noindent This is enough, however, to invalidate the principle of Contraction:

\medskip
$(\varphi\Rightarrow(\varphi\Rightarrow\psi))\Rightarrow(\varphi\Rightarrow\psi)$

\medskip
\noindent This should be old news! The derived truth table for $\Rightarrow$ in K3+cmi is exactly that of the conditional of \L ukasiewicz's 3-valued logic, for which Contraction failure is familiar.  Since the interpretations of the other connectives are the same for K3 and \L3, we have

\begin{proposition}
\L ukasiewicz's 3-valued logic is faithfully interpretable in K3+cmi.
\end{proposition}
\noindent In fact, we have the converse as well\footnote{ \citep{Nelson59} should be credited with the observations of the failure of contraposition, its brute force restoration, and contraction failure in his system of constructible falsity\ldots which is essentially the addition of intuitionistic implication to K3.  What our observation adds to this is that these phenomena do not depend on the intuitionistic nature of Nelson's implication, but arise already in the 3-valued and 4-valued logics.}: our $\rightarrow$ can be defined in \L3 by

\medskip
$(\varphi\rightarrow\psi) =_{df} (\varphi\Rightarrow(\varphi\Rightarrow\psi))$,

\medskip
\noindent so K3+cmi is faithfully interpretable in \L3.  K3+cmi and \L3 can thus be seen as, in effect, alternative formulations of a single logic (see further discussion in Section \ref{synonymous}).  We think the classical nature of $\rightarrow$ makes it easier to use than $\Rightarrow$, and recommend translation into K3+cmi to anyone interested in proving theorems in \L3.

\subsection{On the LP Side}\label{LPside}
Now looking at the logic with truth values \{\T,\B,\F\}, we see in Table \ref{LParrows} that $\Rightarrow$ again differs from $\rightarrow$ on only one of the nine lines: but this time if $\sem{\varphi}$=\Ta and $\sem{\psi}$=\B, then $\sem{\varphi\rightarrow\psi}$=\Ba but $\sem{\varphi\Rightarrow\psi}$=\F.  

\begin{table}[htb]
\begin{center}
\begin{tabular}{cc|ccr}
$\varphi$ & $\psi$  & ($\varphi\rightarrow\psi$)  & ($\varphi\Rightarrow\psi$) &   \\
\hline
\T  & \T  & \T   & \T &\\
\T  & \B  & \B    & \F &  $\Leftarrow$ \\
\T  & \F  & \F   & \F  & \\
\B  & \T  & \T   & \T &  \\
\B  & \B  & \B    & \B &  \\
\B  & \F  & \F    & \F &  \\
\F  & \T  & \T    & \T  & \\
\F  & \B  & \T   & \T &  \\
\F  & \F  & \T   & \T  &  \\
\end{tabular}
\caption{Comparison of two conditionals in LP}
\label{LParrows}
\end{center}
\end{table}

This difference in truth table is enough to invalidate the principle of Thinning:

\medskip
\hspace{.5cm}$\varphi\Rightarrow(\psi\Rightarrow\varphi)$

\medskip
\noindent If $\sem{\varphi}=\Ba$ and $\sem{\psi}=\T$, then $\sem{\psi\Rightarrow\varphi}$ and hence $\sem{\varphi\Rightarrow(\psi\Rightarrow\varphi)}$ will both be \F.  As \cite{Tedder2015} notes, $\Rightarrow$ in LP+cmi has exactly the truth table of the logic RM3 (and, since RM3 is a cousin of the relevance family of logics, failure of Thinning in it is just what one would expect).  RM3 and LP agree on $\{\land,\lor,\neg\}$, so we have 

\begin{proposition}
RM3 can be faithfully interpreted in LP+cmi.
\end{proposition}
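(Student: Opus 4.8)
The plan is to give an explicit, structure-preserving translation $\tau$ from the language of RM3 into that of LP+cmi and then to show that it preserves truth values exactly, so that faithfulness in both directions falls out at once. Since RM3 and LP are built on the same three values $\{\T,\B,\F\}$, with the same designated set $\{\T,\B\}$ and the same matrices for $\land,\lor,\neg$, I would set $\tau(p)=p$ on atoms, let $\tau$ commute with $\land,\lor,\neg$, and render the RM3 conditional by the \emph{defined} connective $\Rightarrow$:
\[
\tau(\varphi\rightarrow_{RM3}\psi)=(\tau(\varphi)\Rightarrow\tau(\psi)),
\]
recalling that $(\alpha\Rightarrow\beta)$ abbreviates $((\alpha\rightarrow\beta)\land(\neg\beta\rightarrow\neg\alpha))$ as above. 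Note that every LP+cmi valuation simply \emph{is} an RM3 valuation, since both assign elements of $\{\T,\B,\F\}$ to atoms; I would therefore identify the two classes of valuations outright rather than set up a correspondence between them.

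The core is a value-preservation lemma, proved by induction on $\varphi$: for every RM3 formula $\varphi$ and every valuation $v$ into $\{\T,\B,\F\}$, the RM3 value of $\varphi$ under $v$ equals $\sem{\tau(\varphi)}$ in LP+cmi under $v$. The atomic case is immediate, and the cases for $\land,\lor,\neg$ are immediate as well, since these connectives receive identical matrices in the two logics. The sole case carrying any content is the conditional, and there the work is already done: by Table \ref{LParrows} the derived matrix of $\Rightarrow$ on $\{\T,\B,\F\}$ coincides line-for-line with the RM3 conditional matrix (this is exactly Tedder's observation recalled just above the statement). Feeding the inductive hypotheses $\sem{\tau(\varphi)}$ and $\sem{\tau(\psi)}$ — which equal the RM3 values of $\varphi$ and $\psi$ — into that common matrix yields equal outputs, closing the induction.

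Faithfulness then follows with no further effort. Because $\tau$ preserves values \emph{exactly} over the single shared class of valuations, and because $\{\T,\B\}$ is the designated set on both sides, $\varphi$ is designated under $v$ in RM3 iff $\tau(\varphi)$ is designated under $v$ in LP+cmi, for every $v$; hence $\varphi$ is RM3-valid iff $\tau(\varphi)$ is LP+cmi-valid, and more generally $\Gamma\vDash_{RM3}\varphi$ iff $\tau[\Gamma]\vDash_{LP+cmi}\tau(\varphi)$. I expect essentially no obstacle here: the one computational ingredient, that $\Rightarrow$ reproduces the RM3 matrix, is the nine-line check already displayed in Table \ref{LParrows}. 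The only point wanting care is the meaning of ``faithful'' — I would make explicit that it is intended in both directions and for the full consequence relation, not merely for theoremhood, and would stress that it is the \emph{equality} of values (not mere preservation of designation), together with the coincidence of the two valuation classes, that delivers the converse directions for free, so that the extra primitive $\rightarrow$ introduces no spurious validities or countermodels.
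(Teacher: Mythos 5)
Your proof is correct and follows essentially the same route as the paper: translate the RM3 conditional as the defined connective $\Rightarrow$, leave $\land,\lor,\neg$ alone, and rely on the nine-line coincidence of the $\Rightarrow$ matrix with the RM3 conditional matrix (Table \ref{LParrows}) over the shared valuations. The paper merely states this observation tersely (crediting Tedder), whereas you spell out the induction and the designation-preservation argument explicitly; the content is the same.
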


\noindent Again, the converse is also true: $\rightarrow$ can be defined in terms of $\Rightarrow$ in RM3 by

\medskip
\hspace{.5cm}$(\varphi\rightarrow\psi) =_{df} ((\varphi\Rightarrow\psi)\lor\psi)$

\medskip
\noindent So in the same sense, RM3 and LP+cmi can be thought of as alternative formulations of a single logic (see discussion in Section \ref{synonymous}), and again, we think adoption of $\rightarrow$ as primitive is likely to be more convenient.

\subsection{About M}
Given that K3 and LP are obtained semantically from FDE by adding different conditions on valuations, and syntactically by adding different rules, one might incautiously conjecture that the FDE consequence relation is simply the intersection of the K3 and LP relations.  Not so: $(\varphi\land\neg\varphi)$ implies $(\psi\lor\neg\psi)$ in both 3-valued logics, but not in FDE.  There is thus a fifth logic in the neighbourhood, which is sometimes called \emph{Mingle} and which we will unimaginatively call M.

\hspace{4cm}
\begin{tikzpicture}
\node (I)    at ( 2,3)   {{\bf CL}};
\node (II)   at (1,2)   {{\bf LP}};
\node (III)  at (3,2) {{\bf K3}};
\node (IV)   at (2,1) {{\bf M}};
\node (V) at (2,0)  {{\bf FDE}};

 \draw[thick,-] (I) -- (II);
 \draw[thick,-]      (I) -- (III);
 \draw[thick,-]      (II) -- (IV);
  \draw[thick,-]      (III) -- (IV);
  \draw[thick,-]    (V)--(IV);
 \end{tikzpicture}

 It is characterized semantically as the set of inferences preserving designation on every FDE valuation which does not assign \Ba to one formula and \Na to another, and syntactically by adding the above-mentioned inference as a general rule to a formulation of FDE.  We think M seems like a rather silly logic (why should the presence of a single truth-value glut rule out the existence of any truth-value gaps, or vice versa?), but record here that M+cmi can be formulated in the obvious way, and that our completeness proof in the Appendix for FDE+cmi extends without difficulty to cover it.

\subsection{Putting them Together}
We have not explored the behaviour of $\Rightarrow$ in FDE+cmi, and, in contrast to the situation with the 3-valued logics, do not know of a historically proposed equivalent for it.  Contraction and Thinning will, of course, both fail for $\Rightarrow$ in the 4-valued logic.  With no relevant insight, we have resorted to construction of truth tables to verify that $\rightarrow$ can be recovered from $\Rightarrow$ in the 4-valued environment by the double-barrelled definition.

\medskip
$(\varphi\rightarrow\psi) =_{df} ((\varphi\Rightarrow(\varphi\Rightarrow\psi)) \lor \psi)$

\section{More and Less Drastic Expansions}
In \cite{HP17} it was shown that a Second Order logic based on LP was surprisingly weak.  This was due to the limited expressive power of the language with no conditional operator.  In contrast, FDE+cmi, with its classical conditional, can be shown to contain full classical Second Order Logic.  Using propositional quantification, we can define a \emph{falsum} propositional constant:

\begin{quote} {\bf f} $= \forall p(p)$\end{quote}
(Obviously set/property quantification would do as well: $\forall X\forall x(X(x))$.)  This constant and the conditional give us in effect a ``classical'' negation: the conditional 
$(\varphi\rightarrow$ {\bf f}) 
takes the value \Fa when $\sem{\varphi}$ is designated and the \Ta when $\sem{\varphi}$ is undesignated.  Call a predicate (monadic or relational) \emph{classical} just in case no atomic formula in it, on any assignment to the individual variables, takes either of the intermediate truth values \Na or \B.  \Na can be ruled out by the First Order formula $\forall x(\psi(x)\lor\neg\psi(x))$,  which will not take a designated value if the predicate yields the value \Na for any individual in the domain.  The possibility that $\psi$ somewhere yields the value \B, however, cannot be excluded by a purely First Order formula: if $\psi$ yields \Ba for every individual, then every First Order formula in which $\psi$ is the only bit of non-logical vocabulary occurring will have the designated value \B.  Using the ersatz classical negation, however, we can say $((\exists x(\psi(x)\land\neg\psi(x))\rightarrow$ {\bf f}).  The classicality of the predicate, then, can by expressed by the conjunction of these two formulas.  We may then interpret classical Second Order Logic in Second Order FDE+cmi by simply restricting all Second Order quantifiers to classical predicates.

Coming back down to the fragment of Second Order FDE+cmi with only propositional quantification, we can define the other three propositional constants.  \emph{Verum} is easy:
\begin{quote} {\bf t} $= \exists p(p)$\end{quote}
Defining a constant for \Ba is perhaps less obvious.  $\exists p(p\land\neg p)$ would work in the 3-valued logic LP, but when \Na is also available it fails: $\sem{p}=\Ba$ gives $\sem{p\land\neg p}=\Ba$, $\sem{p}=\Na$ gives $\sem{p\land\neg p}=\N$, and the existential quantification will have as value the join of these in the truth-value lattice: \T!\footnote{The existentially quantified formula takes the join of the values \N, \B, and \F\ldots which, surprisingly, is \T.  One doesn't expect a disjunction to take a higher value than any of its disjuncts, but in this case, because the four values are not linearly ordered, it does.} However, $\sem{p\rightarrow p}$=\Ba when $\sem{p}$=\B, and is \Ta otherwise, so we may define 
\begin{quote} {\bf b} $= \forall p(p\rightarrow p)$\end{quote}

Defining a constant for \Na in FDE+cmi is altogether harder (though  $\exists p(p\land\neg p)$ would work in the 3-valued logic K3+cmi).  Indeed, it can be shown that no \emph{definiens} with, in prenex form, a single block of propositional quantifiers (all universal or all existential) will work.   
\begin{proof}
Consider a purely propositional formula of FDE+cmi.  An assignment giving the value \Ba to all of its variables will give the formula the value \B.  Thus the set of values assumed by the formula on different assignments to its propositional variables will include \B.  The value of the sentence formed by binding its variables by existential quantifiers will be the lattice join of the values in this set, and so must be either \Ba or \T.   The value of the sentence formed by binding its variables by universal quantifiers will be the lattice meet of the values in this set, and so must be either \Ba or \F.  In neither case will the quantified formula serve as a definiens for the constant {\bf n}.
\end{proof}

The propositional constant {\bf n} can, however, be defined by a sentence of more complicated quantificational structure.  
$\sem{(\varphi\Leftrightarrow\psi)}$ is designated if and only if $\sem{\varphi} = \sem{\psi}$.  
It can thus be thought of as expressing identity of truth value.  The conditional $((\varphi\Leftrightarrow$ {\bf t}$)\rightarrow$ {\bf f}$)$, then, says that the value of $\varphi$ is \emph{not} \T, and similarly for $((\varphi\Leftrightarrow$ {\bf b}$)\rightarrow$ {\bf f}$)$ and $((\varphi\Leftrightarrow$ {\bf f}$)\rightarrow$ {\bf f}$)$.  So we may define {\bf n} by
\begin{quote}{\bf n} $=_{df} \exists q(((q\Leftrightarrow${\bf ~t}$)\rightarrow${\bf ~f}$)\land ((q\Leftrightarrow${\bf ~b}$)\rightarrow${\bf ~f}$)\land((q\Leftrightarrow${\bf ~f}$)\rightarrow${\bf ~f}$) \land q)$\end{quote}
For, if $q$ in the matrix is assigned one of the values \T, \B, or \F, one of the first three conjuncts, and so the whole, will have the value \F.  If $\sem{q}$=\N, however, the first three conjuncts will all have the value \T, but the fourth,  $q$, and so the whole conjunction, will have the value \N.  Since the join of \Fa and \Na is \N, the value of the full, quantified, sentence is \N.  By extracting the quantifiers concealed in the propositional constants in the contained biconditionals in the right order, we can put the definiens into prenex form with only one alternation of quantifiers (so, it will be $\Sigma_2$).

A final observation.  Propositional FDE+cmi, unaugmented, is not functionally complete: no 4-valued truth function mapping uniformly classical arguments to an intermediate value, or mapping uniformly \Ba arguments to a value other than \B, can be represented.  Adding the four propositional constants (now thought of as primitives) gives us functional completeness, for an arbitrary $n$-ary truth function can be expressed by the conjunction of $4^n$ conditionals, each having as antecedent an $n$-place conjunction of $\Leftrightarrow$ biconditionals specifying a set of values for the arguments, and as consequent the constant for the desired value.  That is, we can say in the language that a formula takes a particular one of the semantic values, thus mimicking the ``parametric'' J-operators of \citep{RT52}, and together with the defined truth-constants, we can construct DNF or CNF formulas that manifest any 4-valued truth table by using the technique of \citep{RT52}.

\section{Synonymous Logics}\label{synonymous}

In \citet{Pell84,PU03}, the notion of ``translational equivalence'' was introduced and defined, and used to describe a concept of \emph{synonymous logics}.  This concept was intended to describe cases where two logic systems were ``really the same system'' despite having different formulations, different vocabulary, and possibly having such different formulation that it would not be at all obvious that the logics were ``really the same.''  This notion was shown to be different from various other conceptions in the literature, such as \emph{mutual interpretability} and having \emph{exact translations between logics}, which were shown to be weaker; other notions, such as having identical definitional extensions, were shown to be the same conception.\footnote{Further aspects of the notion, as well as formal details, are in \citet{PU03}.}  

Two logics, $\mathcal{L}_1$ and $\mathcal{L}_2$, are translationally equivalent if and only if there are translation schemes $t_1$ from $\mathcal{L}_1$ into $\mathcal{L}_2$ and $t_2$ from $\mathcal{L}_2$ into $\mathcal{L}_1$ such that 
\begin{enumerate}
\item if $\vdash_{\mathcal{L}_1}\varphi$ then $\vdash_{\mathcal{L}_2}\varphi^{t_1}$
\item if $\vdash_{\mathcal{L}_2}\varphi$ then $\vdash_{\mathcal{L}_1}\varphi^{t_2}$
\item for any formula $\varphi$ in $\mathcal{L}_1,\; (\varphi^{t_1})^{t_2}$ is equivalent to $\varphi$ (in  $\mathcal{L}_1$)
\item for any formula $\varphi$ in $\mathcal{L}_2,\; (\varphi^{t_2})^{t_1}$ is equivalent to $\varphi$ (in  $\mathcal{L}_2$)\footnote{In \citet{PU03} it was assumed that the logics in question had a ``biconditional equivalence connective'' and the third and fourth conditions were expressed in terms of the biconditional being a theorem in the appropriate logics.  In the context of that paper, the logics were classical except for modal operators, and so there were such equivalence operators in each logic.  In the present context, we cannot assume that the biconditionals of the various logics will operate in the same way, and so we envisage checking the ``equivalent to'' conditions semantically, by simply looking at the relevant truth tables.}
\end{enumerate}

We are in a position to show some new results of synonymity of logics.  Since the languages of our different logics are identical except for their (bi)conditionals, we will employ the following translations for all the portions of the logics involved except the differing (bi)conditionals in the logics:

\begin{itemize}
\item For $\varphi$ an atomic sentence, $(\varphi)^t$ is $\varphi$
\item For negated formulas,  $(\neg\varphi)^t$ is $\neg(\varphi)^t$ 
\item If $\circ$ is any binary operator other than a conditional, $(\varphi\circ\psi)^t$ is $((\varphi)^t\circ(\psi)^t)$
\end{itemize}

In \S\ref{K3side} we showed that systems K3+cmi and \L3 could be faithfully interpreted in each other.  We can now prove a stronger result.

\begin{theorem}[\bf{K3+cmi and \L3 are synonymous logics}]
Let $(\varphi\rightarrow_{K3+cmi}\psi)^{t_1}$ be $(\varphi\rightarrow_{\L3}(\varphi^{t_1}\rightarrow_{\L3}\psi^{t_1}))$, and
let $(\varphi\rightarrow_{\L3}\psi)^{t_2}$ be $(\varphi\Rightarrow_{K3+cmi}\psi))$. \ldots i.e., $[(\varphi\rightarrow_{K3+cmi}\psi)\land(\neg\psi\rightarrow_{K3+cmi}\neg\varphi)]$.

Then
the four conditions for translational equivalence are met: the first two by the facts of identity of translations for all connectives except conditionals, and for conditionals by the truth tables for $\rightarrow_{\L3}$ and $\rightarrow_{K3+cmi}$.  The second two are met because the stated translations $t_1$ and $t_2$ of this theorem ``cycle'' -- that is, they each immediately introduce a conditional formula in the other logic which in turn will be eliminated when translated back into the first logic.  
\end{theorem}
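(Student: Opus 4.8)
The plan is to reduce all four translational-equivalence conditions to a single semantic fact: that both $t_1$ and $t_2$ are \emph{value-preserving}. Concretely, I would prove by induction on formula complexity that for every assignment $v$ of values from $\{\T,\N,\F\}$ to the atoms and every formula $\varphi$, the \L3-value of $\varphi^{t_1}$ under $v$ equals the K3+cmi-value of $\varphi$ under $v$, and dually the K3+cmi-value of $\varphi^{t_2}$ equals the \L3-value of $\varphi$. Both logics share the same atoms and the same values $\{\T,\N,\F\}$ with \T the sole designated value, so here ``$\vdash$'' is read as validity, i.e.\ taking \T on every assignment, and no separate completeness theorem is needed.

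The base case is immediate, since both translations fix atoms. For $\neg,\land,\lor$ the translations merely push inward and the two logics use identical matrices for these connectives, so the induction hypothesis transfers values verbatim. The only substantive clause is the conditional, and here I would invoke the two matrix identities already recorded in \S\ref{K3side}: (i) the compound $(\chi\cmi\xi)\land(\neg\xi\cmi\neg\chi)$, i.e.\ $\Rightarrow_{K3+cmi}$, has exactly the truth table of $\rightarrow_{\L3}$; and (ii) $\chi\rightarrow_{\L3}(\chi\rightarrow_{\L3}\xi)$ has exactly the truth table of $\cmi$. Applying the induction hypothesis to the immediate subformulas collapses each conditional clause to a single entry of (i) or (ii). Verifying (i) and (ii) is a finite check over the nine lines of the three-valued matrices --- (i) is just Table \ref{K3arrows} read off directly, and (ii) can be confirmed line by line --- and this is the only genuine computation in the argument.

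Granting the lemma, the four conditions follow. Value-preservation of $t_1$ gives condition (1) (indeed its converse as well), and value-preservation of $t_2$ gives (2). For (3) and (4) I would simply compose: for any K3+cmi-formula $\varphi$ and assignment $v$, the K3+cmi-value of $(\varphi^{t_1})^{t_2}$ equals the \L3-value of $\varphi^{t_1}$ (by $t_2$-preservation), which equals the K3+cmi-value of $\varphi$ (by $t_1$-preservation); hence $(\varphi^{t_1})^{t_2}$ and $\varphi$ agree on every assignment and are equivalent in the semantic sense the definition requires (the footnote permits checking equivalence on truth tables). The symmetric computation yields (4). This is exactly the content of the authors' ``cycling'' remark.

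I do not anticipate a serious obstacle; the difficulty is conceptual rather than computational. The point needing care is that (3) and (4) must be understood as \emph{semantic} equivalence: $(\varphi^{t_1})^{t_2}$ is syntactically far larger than $\varphi$ --- each conditional is replaced by a nested, contraposition-completed compound --- so only agreement of values, not a syntactic return to $\varphi$, is available. A secondary check is that the clauses defining $t_1$ and $t_2$ on the conditional are legitimate translation schemes (a fixed schema substituted for the conditional, the remaining connectives fixed); once that is granted, the induction is entirely routine.
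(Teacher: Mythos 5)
Your proposal is correct, and it rests on the same two computational facts as the paper's own proof: that $\Rightarrow_{K3+cmi}$ has exactly the matrix of $\rightarrow_{\L3}$, and that $\chi\rightarrow_{\L3}(\chi\rightarrow_{\L3}\xi)$ has exactly the matrix of $\cmi$ (the identities already recorded in \S\ref{K3side}). Where you genuinely differ is in how conditions (3) and (4) are discharged. The paper handles (1) and (2) by appeal to those matrix identities, leaving the induction over the non-conditional connectives implicit, and then handles (3) and (4) by a separate computation: it writes out the round-trip formulas $((\varphi\rightarrow_{K3+cmi}\psi)^{t_1})^{t_2}$ and $((\varphi\rightarrow_{\L3}\psi)^{t_2})^{t_1}$ in primitive notation and calls for a further 9-row truth-table check that each is equivalent to the original conditional. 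You instead prove a single explicit value-preservation lemma by induction on formula complexity (each translation preserves the value of every formula on every assignment) and then obtain (3) and (4) by composing its two halves, so no additional truth-table work on the larger round-trip formulas is needed; you also get the converses of (1) and (2) --- faithfulness --- as a byproduct. What the paper's route buys is concreteness: exhibiting the round-trip formulas is what makes its ``cycling'' remark vivid. What yours buys is economy and explicitness: the only finite checks ever required are the two 9-line identities, and the induction the paper leaves tacit is stated and carried out. Your closing caveat --- that (3) and (4) can only mean \emph{semantic} equivalence, agreement of values on all assignments, rather than syntactic return to $\varphi$ --- is exactly the reading mandated by the paper's footnote to the definition of translational equivalence, so your reconstruction matches the authors' intent on the one point where the definition needs interpretation.
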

\noindent Note that $((\varphi\rightarrow_{K3+cmi}\psi)^{t_1})^{t_2}$ = $(\varphi\Rightarrow_{K3+cmi}(\varphi\Rightarrow_{K3+cmi}\psi))$, which -- in primitive notation -- is 

$[\varphi\rightarrow_{K3+cmi}((\varphi\rightarrow_{K3+cmi}\psi)\land(\neg\psi\rightarrow_{K3+cmi}\neg\varphi))]\;\land\;$

\hspace{.5cm}$[\neg((\varphi\rightarrow_{K3+cmi}\psi) \land (\neg\psi\rightarrow_{K3+cmi}\neg\varphi))\rightarrow_{K3+cmi}\neg\varphi]$

\noindent  This last formula and $(\varphi\rightarrow_{K3+cmi}\psi)$ can be checked using  the 9-row K3+cmi truth tables to show their equivalency.

\medskip
\noindent Note also that $((\varphi\rightarrow_{\L3}\psi)^{t_2})^{t_1}$ = $[\varphi\rightarrow_{\L3}(\varphi\rightarrow_{\L3}\psi)]\;\land\;[\neg\psi\rightarrow_{\L3}(\neg\psi\rightarrow_{\L3}\neg\varphi)]$
which can also be checked in the 9-row truth tables for \L3 to demonstrate its equivalence to $(\varphi\rightarrow_{\L3}\psi)$.

In \S\ref{LPside} we showed that systems LP+cmi and RM3 could be faithfully interpreted in each other.  We can now prove a stronger result.  

\begin{theorem}[\bf{LP+cmi and RM3 are synonymous logics}]

Let $(\varphi\rightarrow_{LP+cmi}\psi)^{t_1}$ be $((\varphi\rightarrow_{RM3}\psi) \lor \psi)$, and
let $(\varphi\rightarrow_{RM3}\psi)^{t_2}$ be $(\varphi\Rightarrow_{LP+cmi}\psi))$. \ldots i.e., $[(\varphi\rightarrow_{LP+cmi}\psi)\land(\neg\psi\rightarrow_{LP+cmi}\neg\varphi)]$.

Then the four conditions for translational equivalence are met, in pretty much the same way as they were met in the previous theorem, keeping in mind that $\rightarrow_{K3+cmi}$ and $\rightarrow_{LP+cmi}$ are different, as are $\Rightarrow_{K3+cmi}$ and $\Rightarrow_{LP+cmi}$.
\end{theorem}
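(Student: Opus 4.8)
The plan is to follow the template of the preceding theorem, recycling the two facts recorded in \S\ref{LPside}: that $\Rightarrow_{LP+cmi}$ and $\rightarrow_{RM3}$ share a single truth table, and that $\rightarrow_{LP+cmi}$ is definable in RM3 by $((\varphi\rightarrow_{RM3}\psi)\lor\psi)$. As in the previous theorem, the clauses for $t_1$ and $t_2$ are to be read recursively, so that $(\varphi\rightarrow_{LP+cmi}\psi)^{t_1}$ is really $((\varphi^{t_1}\rightarrow_{RM3}\psi^{t_1})\lor\psi^{t_1})$ and $(\varphi\rightarrow_{RM3}\psi)^{t_2}$ is $(\varphi^{t_2}\Rightarrow_{LP+cmi}\psi^{t_2})$; on atomic sentences, negations, and all binary connectives other than the conditional both translations act as the identity (the three bullet clauses above).

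For conditions 1 and 2 I would first observe that LP+cmi and RM3 share the value set $\{\T,\B,\F\}$, the designated set $\{\T,\B\}$, and identical matrices for $\land,\lor,\neg$. I would then show by induction on complexity that $t_1$ and $t_2$ are \emph{value-preserving}: at the conditional step $t_2$ replaces $\rightarrow_{RM3}$ by $\Rightarrow_{LP+cmi}$, which has the identical table, while $t_1$ replaces $\rightarrow_{LP+cmi}$ by $((\cdot\rightarrow_{RM3}\cdot)\lor\cdot)$, which by the converse definition of \S\ref{LPside} also reproduces the $\rightarrow_{LP+cmi}$ table; every other step preserves values because the connectives are held fixed. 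Since ``$\vdash$'' means ``designated under every assignment'' and the designated sets coincide, value preservation yields at once that $\vdash_{LP+cmi}\varphi$ iff $\vdash_{RM3}\varphi^{t_1}$ and $\vdash_{RM3}\varphi$ iff $\vdash_{LP+cmi}\varphi^{t_2}$, which is more than conditions 1 and 2 require.

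For conditions 3 and 4 I would argue by induction that the round trips $(\cdot^{t_1})^{t_2}$ and $(\cdot^{t_2})^{t_1}$ are the identity up to equivalence. Because ``equivalent'' is checked semantically (sameness of value-function, per the footnote to the definition) and the matrices are truth-functional, equivalence is a congruence, so substitution of equivalent subformulas is legitimate and the induction collapses to the single conditional case. For condition 3, $(\alpha\rightarrow_{LP+cmi}\beta)$ travels to $((\alpha^{t_1}\rightarrow_{RM3}\beta^{t_1})\lor\beta^{t_1})$ and back to a formula equivalent to $((\alpha\Rightarrow_{LP+cmi}\beta)\lor\beta)$; its equivalence to $(\alpha\rightarrow_{LP+cmi}\beta)$ is exactly the converse-definition identity of \S\ref{LPside}, verified on the nine rows of the LP+cmi table (Table \ref{LParrows}). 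For condition 4, $(\alpha\rightarrow_{RM3}\beta)$ travels to $(\alpha^{t_2}\Rightarrow_{LP+cmi}\beta^{t_2})$ and back to a formula equivalent to $([(\alpha\rightarrow_{RM3}\beta)\lor\beta]\land[(\neg\beta\rightarrow_{RM3}\neg\alpha)\lor\neg\alpha])$, whose equivalence to $(\alpha\rightarrow_{RM3}\beta)$ I would confirm on the nine rows of the RM3 table; in fact, applying the condition-3 identity to each conjunct collapses it to $((\alpha\rightarrow_{LP+cmi}\beta)\land(\neg\beta\rightarrow_{LP+cmi}\neg\alpha))$, i.e.\ to $(\alpha\Rightarrow_{LP+cmi}\beta)=(\alpha\rightarrow_{RM3}\beta)$.

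The only genuine computation is the pair of nine-row verifications just named, and these are the steps I expect to carry the weight; everything else is the same bookkeeping as in the previous theorem. The one point demanding care -- and the reason the statement flags that $\rightarrow_{K3+cmi}$, $\rightarrow_{LP+cmi}$ and their starred cousins differ -- is that $\Rightarrow_{LP+cmi}$ must be expanded into its primitive $\rightarrow_{LP+cmi}$ form \emph{before} $t_1$ is applied, and that the LP+cmi and RM3 matrices cannot be copied from the K3 side but must be recomputed from Table \ref{LParrows}.
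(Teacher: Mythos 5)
Your proposal is correct and follows essentially the same route as the paper: conditions 1 and 2 via the shared matrices for $\land,\lor,\neg$ plus the coincidence of the $\Rightarrow_{LP+cmi}$ and $\rightarrow_{RM3}$ tables (and of $\rightarrow_{LP+cmi}$ with $((\cdot\rightarrow_{RM3}\cdot)\lor\cdot)$), and conditions 3 and 4 by observing that the translations cycle and checking the resulting round-trip formulas against $(\varphi\rightarrow\psi)$ on the nine-row truth tables. Your additions -- the explicit value-preservation induction and the congruence remark licensing substitution of equivalent subformulas -- merely spell out what the paper leaves implicit in its appeal to the previous theorem.
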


These instances of synonymous logics strike us as both unexpected and also as ``cleaner'' versions of synonymy of logics than the one(s) displayed in \citep{PU03}, which employed a propositional constant in one of the logics.  As remarked in \cite{PU03}, two logics are translationally equivalent in this way if and only if they have a common definitional extension.  Note then for each of our pairs of synonymous logics, the appropriate 3-valued logic with two conditional operators,  $\rightarrow_{cmi}$ and $\Rightarrow_{cmi}$ is a definitional extension of both members of the pair.  And thus the two logics are translationally equivalent to each other.

\section{Concluding Remarks}
We have discussed a family of logics that are related to FDE, showing how they are related to each other and also to some other logics that have populated the literature (such as \L3 and RM3).  Surprisingly, perhaps, we are also able to show some new and ``cleaner'' examples of synonymous logics (in the sense of \citealp{PU03}).   Deductive systems for the various logics, as well as soundness and completeness proofs are in the Appendices.
\newpage
\bibliographystyle{chicago}
\bibliography{FDEbiblio}

\newpage

\section*{Appendix I:  Deductive Systems}
\cite{Tedder2014} contains a Hilbert-style axiomatic system, and (more interestingly) a (multiple succedent) sequent calculus for LP+cmi.  (\citet{Tedder2015} presents only the Hilbert system.)  The sequent calculus closely follows \citep{Gentzen34}'s system LP for classical logic, with the following changes:
\begin{enumerate}[(i)]
\item  Gentzen's rules for negation (``change the side and change the sign'') are dropped,
  \item double negation rules are added, allowing a sequent to be followed by one like it except that one of its formulas--on either side--is doubly negated,
\item  negative rules for conjunction and disjunction (and, analogously, for quantifiers) are added, allowing the insertion of a negated conjunction or disjunction under the same conditions as allow the insertion of its De Morgan equivalent disjunction or conjunction,
\item negative conditional rules are added, allowing $\neg(\varphi\rightarrow\psi)$ to be inserted under the same conditions as $(\varphi\land\neg\psi)$, and
\item Gentzen's identity axioms, $\varphi\vdash\varphi$,  are supplemented with ``Gap excluding'' axioms of the form $\vdash\varphi,\neg\varphi$.
\end{enumerate}

\noindent Cut elimination is proven by a straightforward adaptation of Gentzen's method.

Dropping the Gap exclusion axioms from this system yields a sequent calculus for FDE+cmi.  Replacing them with ``Glut excluding axioms'' of the form $\varphi,\neg\varphi\vdash$ gives one for K3+cmi, and replacing them instead with Mingle axioms of the form 
$\varphi,\neg\varphi\vdash\psi,\neg\psi$
 gives one for M+cmi.

    A natural deduction system, either in the style of Gentzen's NK or in the Fitch-style presentation of many American textbooks, for any of these logics will include
\begin{enumerate}[(i)]
\item the standard Introduction and Elimination rules for $\land,\lor,\rightarrow$ (and for the quantifiers in a First Order system),
\item Double Negation Introduction and Elimination rues, by which a formula and its double negation may each be inferred from the other,
\item Negative Introduction and Elimination rules for $\land$ and $\lor$ (and, analogously for the quantifiers), as in \citep{fitch52} enforcing the interdeducibility of negated conjunctions and disjunctions with their De Morgan equivalent disjunctions and conjunctions,
\item Negative Introduction and Elimination rules for $\rightarrow$, enforcing the interdeducibility of $\neg(\varphi\rightarrow\psi)$ with $(\varphi\land\neg\psi)$.
\end{enumerate}
As is well-known, the standard Introduction and Elimination rules for $\rightarrow$ give only the Intuitionistic logic of the conditional, so these have to be supplemented with some classicizing postulate to give the full classical logic of the positive connectives.  Addition of Peirce's Law, $(((\varphi\rightarrow\psi)\rightarrow\varphi)\rightarrow\varphi)$ as an axiom scheme is one conventional way of doing this, but an alternative axiom scheme, $(\varphi\lor(\varphi\rightarrow\psi))$ seems a good deal easier to work with, and can be converted into a moderately elegant natural deduction rule:

\begin{enumerate}[(v)]
\item A formula, $\chi$, may be asserted if it is derivable both from the hypothesis $\varphi$ and from the hypothesis $(\varphi\rightarrow\psi)$.
\end{enumerate}

    These rules suffice for the propositional logic FDE+cmi.  (A First Order system would have to supplement the standard and negative Introduction and Elimination Rules for the quantifiers with something to guarantee the ``constant domain'' inference 
    
\medskip
 $\forall x(\varphi\lor\Psi(x)) \vdash (\varphi\lor\forall x(\Psi(x))$
 
 \medskip
\noindent(see \citealt[\S21.31]{fitch52}.)    Systems for the other logics are obtained by adding rules to this basic system:
\begin{enumerate}[1)]
\item Ex falso quodlibet (``explosion''), for K3+cmi; 
\item Excluded middle: $\chi$ may be asserted if it is derivable both from the hypothesis $\varphi$ and from the hypothesis $\neg\varphi$, for LP+cmi; 
\item Mingle: $(\psi\lor\neg\psi)$ may be inferred from $(\varphi\land\neg\varphi)$,  for M+cmi.
\end{enumerate}

More visually: Natural deduction rules for FDE+cmi will be double negation and both the positive and negative IntElim rules for $\land$ and $\lor$.    Additionally, there is a series of rules for our $\rightarrow$ operator.

\bigskip
\noindent Double Negation:

$\begin{fitch}
\fh A  \\
\fa \neg\neg A  & $\neg\neg$ Int 
\end{fitch}$
\hspace{2cm}
$\begin{fitch}
\fh \neg\neg A \\
\fa A & $\neg\neg$ Elim
\end{fitch}$
\bigskip

\noindent Standard rules for lattice connectives:

$\begin{fitch}
\fh A\land B  \\
\fa A & $\land$ Elim
\end{fitch}$
\hspace{.75cm}
$\begin{fitch}
\fh A\land B  \\
\fa B & $\land$ Elim
\end{fitch}$
\hspace{.75cm}
$\begin{fitch}
\fa A  \\
\fj B  \\
\fa A\land B & $\land$ Int
\end{fitch}$

$\begin{fitch}
\fh  A\lor B  \\
\fa \fh A \\
\fa \fa \ldots \\
\fa \fa C \\
\fa \fh B \\
\fa \fa \ldots \\
\fa \fa C \\
\fa C & $\lor$ Elim
\end{fitch}$
\hspace{.75cm}
$\begin{fitch}
\fh A  \\
\fa A\lor B & $\lor$ Int 
\end{fitch}$
\hspace{.75cm}
$\begin{fitch}
\fh  B  \\
\fa A\lor B & $\lor$ Int
\end{fitch}$
\medskip

\noindent Negative rules for the lattice connectives\footnote{In the context of the positive rules, these negative rules are equivalent to Fitch's (\citeyear{fitch52}) original negation IntElim rules.}:

$\begin{fitch}
\fh \neg(A\lor B)  \\
\fa \neg A & $\neg\lor$ Elim
\end{fitch}$
\hspace{1cm}
$\begin{fitch}
\fh \neg(A\lor B)  \\
\fa \neg B & $\neg\lor$ Elim
\end{fitch}$

\bigskip
$\begin{fitch}
\fa \neg A  \\
\fj \neg B  \\
\fa \neg(A\lor B) & $\neg\lor$ Int
\end{fitch}$

$\begin{fitch}
\fh \neg A  \\
\fa \neg(A\land B) & $\neg\land$ Int 
\end{fitch}$
\hspace{2cm}
$\begin{fitch}
\fh \neg B  \\
\fa \neg(A\land B) & $\neg\land$ Int 
\end{fitch}$

\bigskip
$\begin{fitch}
\fh  \neg(A\land B)  \\
\fa \fh \neg A \\
\fa \fa \ldots \\
\fa \fa C \\
\fa \fh \neg B \\
\fa \fa \ldots \\
\fa \fa C \\
\fa C & $\neg\land$ Elim
\end{fitch}$

\bigskip
\noindent Rules for Conditional\footnote{The Dilemma rule, if added to, say, Intuitionistic Logic, would yield a formulation of full classical logic.  It does not collapse FDE+cmi into classical logic because the usual $\neg$-Introduction rule (\emph{reductio}) of Intuitionistic Logic is absent.}

\medskip
$\begin{fitch}
\fa (A\rightarrow B)  \\
\fj A \\
\fa B & $\rightarrow$ Elim
\end{fitch}$
\hspace{2cm}
$\begin{fitch}
\fa \fh A \\
\fa \fa \ldots \\
\fa \fa B \\
\fa A\rightarrow B & $\rightarrow$ Int
\end{fitch}$

\medskip
$\begin{fitch}
\fh \neg(A\rightarrow B)  \\
\fa A & $\neg\rightarrow$ Elim
\end{fitch}$
\hspace{1cm}
$\begin{fitch}
\fh \neg(A\rightarrow B)  \\
\fa \neg B & $\neg\rightarrow$ Elim
\end{fitch}$

\bigskip
$\begin{fitch}
\fa \fh A \\
\fa \fa \ldots \\
\fa \fa B \\
\fa \fh A\rightarrow C \\
\fa \fa \ldots \\
\fa \fa B \\
\fa B & Dilemma
\end{fitch}$
\hspace{2cm}
$\begin{fitch}
\fa  A \\
\fj \neg B  \\
\fa \neg(A\rightarrow B) & $\neg\rightarrow$ Int
\end{fitch}$

\newpage
\section*{Appendix II:  Soundness and Completeness}
The natural deduction system is provably sound and complete, where soundness is taken to mean that if all the premisses of a derivation have designated values on an assignment, the conclusion will as well.

Soundness can be verified in the usual way, arguing by induction on the size of the derivation after establishing that each rule is sound.  For rules in which a conclusion is inferred directly from one or two premisses, this is immediate, by inspection of the truth tables.  A rule in which a hypothesis is discharged (that is, in the terminology of Fitch-style natural deduction, a rule involving one or two \emph{subordinate proofs}) is considered sound just in case, if all the undischarged hypotheses above the conclusion of the rule (in Fitch-style: the formulas \emph{reiterated} into subordinate proofs) have designated values, the conclusion also has a designated value.  It is easy to see that the subproof rules of the system will be sound \emph{provided that} the reasoning within the subordinate proofs is sound.  The full soundness proof, then, will take the form of a double induction, on the length, and on the depth of nesting of subordinate proofs within, a proof.  The overall strategy is perfectly standard for soundness proofs of natural deduction systems.

Completeness can be proven by a variant of Henkin's method, similar to that used in \citet{Priest2018}.  We desire to show that, if a formula A is not derivable from a set of premisses $\Gamma$, then there is an assignment on which A takes an undesignated value but every member of $\Gamma$ is designated.  In a Henkin-style proof this is done in two stages.  In the first, it is shown that $\Gamma$ can be extended to an eligible set $\Gamma^*$ which still does not (syntactically) imply A, where a set of formulas is said to be eligible if it has some of the formal characteristics of the set of formulas taking designated values on some assignment.  In the second it is shown that the eligible set is actually elected: an assignment is defined on which all and only the members of the set take designated values.

In applying Henkin's method to a classical system, eligible sets are simply \emph{complete theories}, a.k.a. \emph{maximal consistent sets} of formulas.  In logics tolerating contradictions, consistency is obviously not a requirement, and in logics tolerating truth value ``gaps'' maximality is also not to be hoped for!  An appropriate notion of eligibility for our purposes counts a set of formulas as eligible if and only if (i) it is deductively closed (and so, in particular, contains a conjunction if and only if it contains both conjuncts, and contains a disjunction if it contains either disjunct), and (ii) contains at least one of the disjuncts of each disjunction it contains.  Given A not derivable from $\Gamma$, it is readily seen that a set maximal with respect to the property of containing $\Gamma$ but not implying A will be eligible in this sense, and the existence of such a maximal set follows from standard set-theoretic considerations (Teichm\"uller-Tukey lemma).  (In general these maximal sets will not be the only eligible supersets of $\Gamma$ not implying A, and an alternative proof adding formulas to $\Gamma$ only if they are required by clause (ii) may yield a smaller eligible set.)  

   Given an eligible set $\Gamma^*$, we define an assignment to propositional variables by setting 
\begin{itemize}
\item $v(p)=\Ta$ iff $p$ is a member but $\neg p$ is not a member of $\Gamma^*$,
\item $v(p)=\Ba$ iff $p$ and $\neg p$ both belong to $\Gamma^*$,
\item $v(p)=\Na$ iff neither $p$ nor $\neg p$ belongs to $\Gamma^*$, and
\item $v(p)=\Fa$ iff $\neg p$ but not $p$ is a member of $\Gamma^*$.
\end{itemize}
(Note that a variable takes a designated value if and only if it belongs to $\Gamma^*$.)  It remains to verify (by induction on formula complexity) that arbitrary formulas take values on this assignment under the same conditions of their, and their negations', membership in $\Gamma^*$.  None of the cases are hard; those not immediately obvious usually become obvious when one remembers that formulas of the form $A\lor(A\rightarrow B)$ are provable in the system.

\bigskip 
In a bit more detail:

\noindent {\bf Soundness:}

The notion of validity we want is:
\begin{definition}{{\bf Validity:}} If all the assumptions\footnote{Where formulas reiterated into a subproof are counted among that subproof's assumptions.} have designated values, then every formula or subordinate derivation  of the derivation has a designated value.
\end{definition}

One could easily check that all the rules are classically valid, and conclude that \emph{of course} the FDE+cmi system is sound.  But perhaps some of the \emph{negative} rules are worth checking.

\begin{table}[!h]
\begin{center}
\begin{tabular}{c || c c c c}
$\land$ & \T & \B & \N & \F \\
\hline\hline
\T & \T & \B & \N & \F \\
\B & \B & \B & \F & \F \\
\N & \N & \F & \N & \F \\
\F & \F & \F & \F & \F 
\end{tabular}
\caption{$\land$ truth-table}
\end{center}
\end{table}
\noindent A quick check of the $\neg\land$-Int rules against this truth table makes it clear that these rules are sound.  The presence of B means that $\neg\land$-Elim takes a bit longer, but is clearly correct also.

In full formal detail, the soundness proof (as is usual for a soundness proof of Fitch-style natural deduction systems) is a double induction on 
\begin{enumerate}
\item Length:  the number of non-assumption formula items in the derivation.
\item Depth: the depth of nesting of subproofs.
\end{enumerate}
The induction step of the Depth induction goes:
\begin{quote}
Suppose A comes by a subproof-using rule.  (Now, a number of cases.)  By hypothesis of induction on Length, anything reiterated into the subproof is ok (since A would occur in a main subproof above this step).  By hypothesis of induction on Depth [we call this the ``key step''], the subproof is sound.  So, if the hypothesis of the subproof is ok, so is its ``active'' last item.  So -- now verifying each case -- A is ok.

This gives us the cases for subproof-involving rules in the indicative step of the Length induction for derivations of depth $\le n$.  So derivations at this depth are sound.  In a derivation of depth $n+1$, every subproof is of depth $\le n$, so the ``key step'' is guaranteed.
\end{quote}

\bigskip
\noindent{\bf Completeness Construction:}

Define a set of formulas to be \emph{saturated} if and only if
\begin{enumerate}
\item  It is consistent \label{require0} in the sense of not being the set of all formulas, and
\item It is \emph{deductively closed}, and so automatically contains
\begin{enumerate}
\item A conjunction if and only if it contains both conjuncts, and
\item A disjunction if it contains at least one disjunct
\end{enumerate}
\item It contains at least one disjunct of each of its disjunctions
\end{enumerate}

Note that for \emph{classical} logic, in which by deductive closure $(A \lor \neg A)$ belongs to every saturated set, saturation amounts to \emph{maximal consistency}: saturated sets are the generalization for non-classical logics of the ``maximal consistent sets'' familiar from Henkin proofs for classic(-ally based) logic(s).

What we have to prove is: 
\begin{lemma}
For any set of formulas, $\Gamma$, and for any formula, $A$, not derivable from $\Gamma$, there is a \emph{saturated} superset of $\Gamma$, $\Gamma^*$, not containing $A$.  (Since this specification includes the requirement that $A\notin \Gamma^*$, the requirement of consistency, (\ref{require0}), doesn't \emph{have} to be made part of the \emph{definition} of saturated set.
\end{lemma}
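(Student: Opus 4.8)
The plan is to run a Lindenbaum-style maximization argument and then check the three saturation clauses one at a time. First I would consider the collection $\mathcal{S}$ of all supersets $\Delta \supseteq \Gamma$ with $\Delta \not\vdash A$, ordered by inclusion. It is nonempty, since $\Gamma$ itself belongs to it by the hypothesis $\Gamma \not\vdash A$. The defining property is of finite character: because derivations are finite, $\Delta \vdash A$ holds iff some finite subset of $\Delta$ derives $A$, so $\Delta$ fails to derive $A$ exactly when all of its finite subsets do. Hence the Teichm\"uller--Tukey lemma (equivalently, Zorn's lemma, noting that the union of a chain in $\mathcal{S}$ again lies in $\mathcal{S}$, since a derivation of $A$ from the union would use only finitely many formulas, all lying in a single member of the chain) yields a maximal element $\Gamma^* \in \mathcal{S}$. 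By construction $\Gamma \subseteq \Gamma^*$ and $\Gamma^* \not\vdash A$; in particular $A \notin \Gamma^*$, since otherwise $A$ would be derivable from itself, so $\Gamma^*$ is not the set of all formulas and clause (1) holds automatically.

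Next I would verify deductive closure, clause (2). Suppose $\Gamma^* \vdash B$ but $B \notin \Gamma^*$. Then $\Gamma^* \cup \{B\}$ properly extends $\Gamma^*$, so by maximality $\Gamma^* \cup \{B\} \vdash A$. Composing this with the derivation of $B$ from $\Gamma^*$ --- that is, using the admissibility of cut, which for a Fitch-style system amounts to substituting one derivation for a premise occurrence in another --- gives $\Gamma^* \vdash A$, contradicting the choice of $\Gamma^*$. Hence $B \in \Gamma^*$. The two sub-clauses then fall out of rules already in the system: $\land$-Int together with $\land$-Elim give that $\Gamma^*$ contains $(C \land D)$ iff it contains both $C$ and $D$, and the two $\lor$-Int rules give that $\Gamma^*$ contains $(C \lor D)$ whenever it contains either disjunct.

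The crux is clause (3), the primeness condition, and this is the step I expect to carry the real weight. Suppose $(C \lor D) \in \Gamma^*$ but neither disjunct belongs to $\Gamma^*$. By maximality, both $\Gamma^* \cup \{C\} \vdash A$ and $\Gamma^* \cup \{D\} \vdash A$. I would then invoke $\lor$-Elim directly: reiterating the members of $\Gamma^*$ into two subproofs, one hypothesizing $C$ and one hypothesizing $D$, each deriving $A$, and discharging against $(C \lor D) \in \Gamma^*$, produces a derivation of $A$ from $\Gamma^*$ alone, contradicting $\Gamma^* \not\vdash A$. So at least one disjunct must lie in $\Gamma^*$. The one thing to be careful about is that $\lor$-Elim is a genuine unrestricted primitive of each of the systems in play (it sits among the standard lattice rules of Appendix~I) and is legitimate despite the gappy or glutty character of the logics. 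Since none of the saturation clauses mention $\neg$, and the whole argument uses only $\lor$-Elim, $\land$-Int/Elim, $\lor$-Int, and cut, nothing about explosion, excluded middle, or the behaviour of negation is needed; the same proof therefore establishes the lemma uniformly for FDE+cmi and each of its extensions.
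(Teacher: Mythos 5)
Your proof is correct, but it takes the route the paper deliberately sidesteps. You form the family of all supersets of $\Gamma$ not deriving $A$, extract a maximal element by Zorn/Teichm\"uller--Tukey, and then recover the saturation clauses from maximality: clause (1) from $A\notin\Gamma^*$, deductive closure from admissibility of cut, and primeness from $\lor$-Elim. All of these steps are sound, and your key ingredient --- that $(C\lor D)\in\Gamma^*$ together with $\Gamma^*\cup\{C\}\vdash A$ and $\Gamma^*\cup\{D\}\vdash A$ would yield $\Gamma^*\vdash A$ by $\lor$-Elim --- is exactly the ingredient the paper relies on as well. The difference is the set-theoretic scaffolding. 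The paper's proof is a ``cautious'' Lindenbaum construction: an ordinally indexed sequence that alternates taking the deductive closure (even stages) with adding a single missing disjunct of some disjunction already present (odd stages), takes unions at limits, and halts at a fixed point $\Gamma_\Omega$, which is then saturated and does not imply $A$. The authors choose this precisely because maximal sets can be ``undesirably stuffed'' --- they point out that the maximal consistent sets of intuitionistic propositional logic are already classically maximal --- whereas the cautious construction adds formulas only when clause (3) demands it, and so can yield smaller saturated sets (for instance, ones leaving many atoms gapped). Interestingly, the paper's own informal overview concedes your route: it remarks that a set maximal with respect to containing $\Gamma$ and not implying $A$ is ``readily seen'' to be eligible, citing the Teichm\"uller--Tukey lemma. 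For the completeness theorem either argument suffices, since the verification stage uses only saturation; your version is shorter and avoids transfinite recursion and the fixed-point argument, while the paper's buys tighter control over which formulas end up in the witness set.
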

\begin{proof}
(By a version of the standard Lindenbaum construction.)

For classical logic, where we want \emph{maximal} consistent sets anyway, it is normal to consider \emph{all} the formulas of the language (in some order), tossing each one in if its addition doesn't permit the derivation of {\sc the bad thing}.  But this can tend to \emph{undesirably stuffed} sets of formulas!  For example, the \emph{maximal} consistent sets of intuitionistic propositional logic are \emph{classically} maximal!  So we prefer a more cautious addition of formulas.  

\begin{definition}
An ordinally indexed series of sets of formulas:
\begin{quote}
Let $\Gamma_0 = \Gamma$

Assume some fixed well-ordering of the formulas of the language:

For \emph{odd successors}, $\alpha$,
\begin{itemize}
\item If $\Gamma_\alpha$ is saturated, stop
\item if not, pick the \emph{first} (in the assumed ordering) disjunction in $\Gamma_\alpha$ for which neither disjunct is in $\Gamma_\alpha$.  By the $\lor$-Elim rule, if $\Gamma_\alpha$ does not imply $A$, at least one of these disjuncts can be added to $\Gamma_\alpha$ without permitting the derivation of A.  Let $\Gamma_{\alpha+1}$ be the result of adding the first such disjunct (or the only one, if only one can be added) to $\Gamma_\alpha$.
\end{itemize}

For \emph{even successors} (and for 0), let $\Gamma_{\alpha+1}$ be the deductive closure of $\Gamma_\alpha$.

For \emph{limit ordinals} $\lambda$, let $\Gamma_\lambda = \bigcup\limits_{\alpha<\lambda}(\Gamma_\alpha)$.
\end{quote}

\end{definition}
By cardinality considerations (there are more ordinals than formulas, so eventually we will run out of formulas to add), this sequence must reach a fixed point, $\Gamma_\Omega$.
By the usual Henkin arguments, $\Gamma_\Omega$ is saturated and does not imply $A$.

Now, for \emph{atomic} $p$, define
\begin{itemize}
\item $p$ has the value \Ta iff $p$ does and $\neg p$ does not belong to $\Gamma_\Omega$
\item $p$ has the value \Ba iff both $p$ and $\neg p$ belong to $\Gamma_\Omega$
\item $p$ has the value \Na iff neither $p$ nor $\neg p$ belong to $\Gamma_\Omega$
\item $p$ has value \Fa iff $\neg p$ but not $p$ belongs to $\Gamma_\Omega$
\end{itemize}
We now in a position to \emph{verify}  that for arbitrary formulas $A$, the same correlation holds between value and status with respect to membership in $\Gamma_\Omega$

\end{proof}

\subsection*{Completeness Verification:}\label{sec:Verify}
\begin{theorem}
Given a \emph{saturated} set S, if we define, for atomic $p$:

$p\rightsquigarrow$\Ta iff $p\in S$ and $\neg p\notin S$

$p\rightsquigarrow$\Ba iff $p\in S$ and $\neg p\in S$

$p\rightsquigarrow$\Na iff $p\notin S$ and $\neg p\notin S$

$p\rightsquigarrow$\Fa iff $p\notin S$ and $\neg p\in S$

\noindent we will have the same coincidence of values and S-status for all formulas
\end{theorem}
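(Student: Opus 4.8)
The plan is to prove the four-way correlation by a single induction on the complexity of the formula, but to phrase the inductive hypothesis in terms of two independent ``bits'' rather than the four values directly. Recalling the set-of-classical-values reading ($\T=\{T\}$, $\B=\{T,F\}$, $\N=\{\,\}$, $\F=\{F\}$), the correlation in the statement is equivalent to the conjunction of two simpler claims, namely that for every formula $A$,
\[
T\in\sem{A}\iff A\in S,\qquad\text{and}\qquad F\in\sem{A}\iff \neg A\in S,
\]
where ``$T\in\sem{A}$'' abbreviates $\sem{A}\in\{\T,\B\}$ and ``$F\in\sem{A}$'' abbreviates $\sem{A}\in\{\B,\F\}$. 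The base case is immediate from the defining clauses for atomic $p$. What makes the induction go through is that each connective's truth table can be read ``coordinate-wise'': the $T$-bit and the $F$-bit of a compound are each a Boolean combination of the corresponding bits of its immediate subformulas, and each such Boolean combination is matched by a provability/saturation fact.

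First I would dispatch the lattice connectives. For negation the order-inverting table gives $T\in\sem{\neg B}\iff F\in\sem{B}$ and $F\in\sem{\neg B}\iff T\in\sem{B}$; by the inductive hypothesis these become $\neg B\in S$ and (via the Double Negation rules together with deductive closure, which make $B$ and $\neg\neg B$ interderivable) $\neg\neg B\in S$, as required. For conjunction and disjunction the $T$-bit is meet/join and the $F$-bit is join/meet; the positive directions use $\land$-Int/Elim and $\lor$-Int with deductive closure, while the one genuinely non-routine point is turning saturation clause (iii) (``contains a disjunct of each disjunction it contains'') into a disjunction property for \emph{negated} compounds. Here I would use the negative rules: $\neg\lor$-Int/Elim make $\neg(B\lor C)$ interderivable with $\neg B\land\neg C$, while for $\neg(B\land C)$ one runs $\neg\land$-Elim with $\neg B\lor\neg C$ as the target formula, supplying $\lor$-Int in each subproof, thereby deriving the \emph{genuine} disjunction $\neg B\lor\neg C$; only then does saturation (iii) apply.

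The conditional case is where I expect the real work to be, and it splits cleanly. The negated conditional is easy: the $\neg\!\rightarrow$ rules make $\neg(B\rightarrow C)$ interderivable with $(B\land\neg C)$, so $\neg(B\rightarrow C)\in S$ iff $B\in S$ and $\neg C\in S$, which by the inductive hypothesis is exactly $T\in\sem{B}$ and $F\in\sem{C}$ --- and inspection of Table~\ref{defcmi} confirms this is precisely when $F\in\sem{B\rightarrow C}$. For the positive bit one reads off that $T\in\sem{B\rightarrow C}$ iff $T\notin\sem{B}$ or $T\in\sem{C}$, i.e.\ (by the inductive hypothesis) iff $B\notin S$ or $C\in S$. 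The direction ``$(B\rightarrow C)\in S\Rightarrow(B\notin S$ or $C\in S)$'' is just Modus Ponens ($\rightarrow$-Elim) with closure, and the subdirection ``$C\in S\Rightarrow (B\rightarrow C)\in S$'' is $\rightarrow$-Int with closure.

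The main obstacle --- and the step the author's hint is pointing at --- is the remaining subdirection ``$B\notin S\Rightarrow (B\rightarrow C)\in S$'', since nothing about the \emph{absence} of $B$ obviously manufactures a conditional. The key is that the system proves the scheme $\varphi\lor(\varphi\rightarrow\psi)$ (the Dilemma/Peirce-strength postulate that classicizes the positive fragment), so $B\lor(B\rightarrow C)$ lies in $S$ by deductive closure; saturation clause (iii) then forces $B\in S$ or $(B\rightarrow C)\in S$, and hence $B\notin S$ yields $(B\rightarrow C)\in S$. Once all four connective cases are in hand the two-bit invariant holds for every $A$, and reassembling the bits into the four values gives exactly the $\rightsquigarrow$-correlation claimed, completing the verification and with it the completeness proof.
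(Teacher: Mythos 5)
Your proof is correct, and it reaches the paper's conclusion by a genuinely different organization of the same induction. The paper proves the conditional case by brute enumeration: it walks through the 16 combinations of values for $A$ and $B$ (grouped as ``left column,'' ``bottom two rows,'' ``top row,'' ``second row''), checking in each case the membership status of both $(A\rightarrow B)$ and $\neg(A\rightarrow B)$ in $S$, and it dismisses the lattice connectives as obvious. You instead factor the four-valued correlation into two independent membership bits --- $T\in\sem{A}$ iff $A\in S$, and $F\in\sem{A}$ iff $\neg A\in S$ --- and prove each bit's biconditional connective by connective. The key ingredients are identical in both proofs: $\rightarrow$-Elim and $\rightarrow$-Int for the easy directions, the $\neg\rightarrow$ rules reducing $\neg(B\rightarrow C)$ to $B\land\neg C$, and, crucially, the theorem scheme $B\lor(B\rightarrow C)$ (proved by Dilemma) combined with the third saturation clause to manufacture $(B\rightarrow C)\in S$ when $B\notin S$ --- exactly the step the paper flags with its remark that the hard cases ``become obvious when one remembers that formulas of the form $A\lor(A\rightarrow B)$ are provable in the system.'' What your decomposition buys is economy and uniformity: two biconditionals per connective instead of a case-per-value analysis, and it automatically delivers both what the paper calls ``the way up'' and ``the way back down'' as the two directions of each biconditional, rather than having them intermingled across cases. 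It also makes you explicit about a point the paper glosses over when it omits the lattice connectives: the saturation clause applies only to formulas that literally are disjunctions, so for $\neg(B\land C)$ one must first derive the genuine disjunction $\neg B\lor\neg C$ by $\neg\land$-Elim (with $\lor$-Int inside the subproofs) before saturation can split it --- a detail your write-up handles correctly. The paper's enumeration, for its part, keeps the verification entirely mechanical: each cell of the truth table is checked directly against the rules, with no reformulation of the statement needed.
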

\begin{proof}
By induction on formula structures.  We omit proofs for the $\neg,\lor,\land$ connectives, which are obvious. So we consider $\rightarrow$:

\noindent{\bf The way up}  Suppose equivalence holds for $A$ and $B$, we show it holds for $A\rightarrow B$.

\noindent There are 16 combinations of truth-like values for $A, B$.  For each, by hypothesis of induction, assume the right S-membership status.
\begin{itemize}
\item {\bf Cases}: Left column.  $B$ is \T, so is in S.  So, $A\rightarrow B$ is in S.  $\neg B$ is \emph{not} in S, so by simple $\neg\rightarrow$-Elim, $\neg(A\rightarrow B)$ can't be either.
\item {\bf Cases}: Bottom two rows.  A is either \Fa or \N, so by hypothesis of induction, A is \emph{not} in S.  By Dilemma, $A\lor(A\rightarrow B)$ is in S, so by saturation $(A\rightarrow B)\in S$.  In the other direction, simple $\neg\rightarrow$-Elim rule would get from $\neg(A\rightarrow B)$ to $A$, so $\neg(A\rightarrow B)$ can't be in S.
\item {\bf Cases}: Top row:  $A$ has value \T, so $A\in S, \neg A\notin S$
\begin{itemize}
\item {\bf subcase}: $B$ has value \B, so both $B\in S$ and $\neg B\in S$.  Since $B\in S$, $\rightarrow$-Int gives $(A\rightarrow B)\in S$.  Since $A\in S$ and $\neg B\in S$, the simple $\neg\rightarrow$-Int rule gives $\neg(A\rightarrow B)\in S$.
\item {\bf subcase}: $B$ has value \N, so neither $B\in S$ nor $\neg B\in S$.  If $(A\rightarrow B)$ were in S, $\rightarrow$-Elim would put $B$ in, so $(A\rightarrow B)\notin S$.  If $\neg(A\rightarrow B)$ were in S, simple $\neg\rightarrow$-Elim rule would put both $A$ and $\neg B$ in S, so $\neg(A\rightarrow B)\notin S$.
\item {\bf subcase}: $B$ has value \F, so $\neg B\in S$ and $B\notin S$.  If $(A\rightarrow B)$ were in S, $\rightarrow$-Elim would put $B$ in, so $(A\rightarrow B)\notin S$.  On the other hand, since $A$ and $\neg B$ are both available, simple $\neg\rightarrow$-Int gives $\neg(A\rightarrow B)\in S$.
\end{itemize}
\item {\bf Cases}: Second row: $A$ has value \B, so both $A\in S$ and $\neg A\in S$
\begin{itemize}
\item {\bf subcase}:  $B$ has value \B, so by hypothesis of induction, $B$ and $\neg B$ are both in S.  Since $B$ is available, $\rightarrow$-Int gets $(A\rightarrow B)\in S$.  Since $A$ and $\neg B$ are both available, simple $\neg\rightarrow$-Int puts $\neg(A\rightarrow B)\in S$.
\item {\bf subcase}: $B$ has value \N, so neither $B$ nor $\neg B$ is in S.  Since $A$ is available, if $(A\rightarrow B)$ were in S, $\rightarrow$-Elim would put $B$ in S.  If $\neg(A\rightarrow B)$ were in S, simple $\neg\rightarrow$-Elim would put $\neg B$ in S.  So neither $(A\rightarrow B)$ nor $\neg(A\rightarrow B$ is in S.
\item {\bf subcase}: $B$ has value \F, so $\neg B\in S$ and $B\notin S$.  Since $A$ and $\neg B$ are available, $\neg\rightarrow$-Int puts $\neg(A\rightarrow B)\in S$.  Since $A$ is available, if $A\rightarrow B$ were in S, $B$ would be also.
\end{itemize}
\end{itemize}
\end{proof}


Normally, in a Henkin-style completeness proof, one defines truth for atoms as membership in the saturated set, and the \emph{verification} stage checks that complex formulas are true if and only if they are members.  This requires \emph{two} arguments: one that \emph{if} a formula is true it is a member, and one that \emph{if} a formula belongs to the set, then it is true.

Because the semantic clauses for \T, \B, \N, \Fa are more complex, these two parts are now intermingled.  If a formula has a certain value, then both the membership and non-membership of the formula and its negation (one for each) in the set have to be checked.  Thus {\bf the way up} (verifying that formulas with certain values have the right membership statuses) already in effect includes {\bf the way back down} (verifying that formulas with certain membership statuses have the right values).

\end{document}